\documentclass{article}

\usepackage[preprint]{neurips_2026}

\usepackage[utf8]{inputenc} % allow utf-8 input
\usepackage[T1]{fontenc}    % use 8-bit T1 fonts
\usepackage{hyperref}       % hyperlinks
\usepackage{url}            % simple URL typesetting
\usepackage{booktabs}       % professional-quality tables
\usepackage{amsfonts}       % blackboard math symbols
\usepackage{nicefrac}       % compact symbols for 1/2, etc.
\usepackage{microtype}      % microtypography
\usepackage{xcolor}         % colors
\usepackage{dirtytalk}
\usepackage{amsmath}
\usepackage{amsthm}
\usepackage{comment}
\usepackage{multirow}
\usepackage{array}
\usepackage{graphicx}
\usepackage{rotating}

\newcommand{\pmv}[2]{#1{\scriptsize$\,\pm\,$#2}}

\newtheorem{assumption}{Assumption}
\newtheorem{proposition}{Proposition}

\title{Target-confidence Recourse Using tSeTlin machines: TRUST}

\author{%
K. Darshana Abeyrathna, Sara El Mekkaoui, Nils Enric Canut Taugbøl, Anuja Vats \\
  Group Research and Development\\
  Det Norske Veritas (DNV)\\
  Oslo, Norway \\
  \texttt{\{Darshana.Abeyrathna.Kuruge, Sara.El.Mekkaoui\}@dnv.com} \\
   \texttt{\{Nils.Enric.Canut.Taugbol, anuja.vats\}@dnv.com} \\
}

\begin{document}

\maketitle

\begin{abstract}
Counterfactual explanations are widely used to provide algorithmic recourse in high-stakes decision-making systems. Most existing methods formulate recourse as a binary objective: find the smallest change to an input that flips the model’s decision. However, in many real-world settings, decision-makers rely not only on predicted labels but also on confidence thresholds and risk margins. Counterfactuals that barely cross a decision boundary are often fragile, difficult to justify, and unstable under noise or model variation.

In this paper, we propose \emph{Target-confidence Recourse Using tSeTlin machines (TRUST)}, a formulation in which users explicitly specify the desired prediction confidence for recourse. Rather than generating counterfactuals and assessing confidence post hoc, our framework directly searches for minimal changes that satisfy a user-defined confidence target, enabling principled comparison of recourse options along cost, confidence, and robustness.

We instantiate TRUST using a Probabilistic Tsetlin Machine (PTM) combined with Bayesian optimization. The probabilistic, clause-based structure of PTM enables fine-grained explanations that link prediction confidence to the stability of underlying decision rules. In particular, we show that counterfactuals satisfying identical rules can still differ significantly in reliability, depending on how securely they satisfy the probabilistic conditions of those rules, revealing whether a decision is supported by robust or fragile clause activations.

Experiments on synthetic and real-world datasets demonstrate that target-confidence counterfactuals produce more robust and interpretable recourse compared to boundary-based approaches, achieving perfect robustness on multiple benchmarks while maintaining low recourse cost (e.g., L2 distance of 0.10 on the Haberman dataset at 0.92 confidence). By explicitly controlling prediction confidence and exposing rule-level stability, the proposed framework generates counterfactuals that move beyond minimal validity toward stable, transparent, and actionable decision support in high-stakes settings.
\end{abstract}

\section{Introduction}

Machine learning models are increasingly used to support or automate high-stakes decisions in domains such as credit approval, insurance underwriting,  clinical risk stratification, and hiring, where regulatory frameworks such as the EU AI Act and GDPR impose obligations of explainability \cite{EUAIActArt13, wachter2017counterfactual}. When a machine learning model produces an adverse outcome, the affected individual is entitled to understand what could have been done differently, also known as algorithmic recourse. Counterfactual explanations (CEs) have emerged as the most practically and legally grounded forms for this, specifying the minimal change to an input that would have yielded a favorable decision \cite{wachter2017counterfactual}.  Since their formal introduction, CEs have been extended along many axes encouraging diversity \cite{mothilal2020explaining}, manifold-consistency \cite{joshi2019towards, pawelczyk2020learning, poyiadzi2020face}, causal faithfulness \cite{karimi2021algorithmic}, and actionability \cite{ustun2019actionable}. Despite this breadth, virtually all methods share the same optimization skeleton: minimize an input-change cost subject to a label flip,
\begin{equation}
  \min_{x'} d(x, x') \quad \text{s.t.} \quad M(x') \geq 0.5,
\label{eq:1}  
\end{equation}

where \(M:\mathcal{X} \to [0,1]\) is the model's predicted probability for the positive class. However this binary framing is technically fragile and misaligned with how high-stakes decision are made.

\textbf{Fragility of boundary-hugging counterfactuals:} Prior work shows that counterfactuals generated near decision boundaries are often fragile under perturbations and model updates \cite{pawelczyk2022adversarial, slack2021manipulated, upadhyay2021robust, pmlr-v162-dominguez-olmedo22a}. This has motivated robust CE methods \cite{dutta2022robust,hamman2023robust, jiang2023formalising}, many of which observe that higher-confidence counterfactuals tend to be more stable. Existing approaches encourage this indirectly through margin constraints \cite{dutta2022robust}, neighborhood-validity objectives \cite{hamman2023robust}, or probabilistic certificates \cite{pawelczyk2023probabilistic}. However, prediction confidence remains a post-hoc property rather than a user-specified optimization target.

\textbf{CE misalignment:} Real decision systems frequently apply explicit confidence thresholds. For example, a counterfactual achieving \(M(x')=0.52\) may technically satisfy Eq.~\ref{eq:1} yet remain unacceptable under practical approval policies and vulnerable to small perturbations. Existing methods therefore cannot directly answer questions such as: \textit{What is the minimal change required to achieve approval with at least 80\% confidence?} Nor do they support principled comparison between low-cost but fragile recourse and higher-confidence, more robust alternatives.

\textbf{Relationship to uncertainty-aware methods:} Prior uncertainty-aware approaches model confidence or uncertainty explicitly during CE generation. CLUE \cite{antoran2020getting} searches for inputs that reduce epistemic uncertainty, while \cite{pawelczyk2023probabilistic} derive probabilistic robustness certificates under model retraining. However, these methods either explain uncertainty post-hoc or certify robustness after generation; they do not allow users to specify a desired confidence level as part of the recourse objective itself.

\textbf{This work:}
We propose Target-confidence Recourse Using tSeTlin machines (TRUST), in which prediction confidence is an optimization constraint. Formally, given an input  \(x\) with \(M(x) < 0.5\), a target confidence \(\tau \in (0.5, 1)\), and tolerance \(\epsilon>0\), we seek\footnote{For simplicity of exposition, we consider factual instances from class 0 and counterfactuals from class 1; the approach is directly applicable in the reverse setting.}:
\begin{equation}
\min_{x'} d(x, x') \quad \text{s.t.} \quad |M(x') - \tau| \leq \epsilon.
\label{eq:2}
\end{equation}

Where objective \ref{eq:2} strictly generalizes \ref{eq:1}: setting \(\tau = 0.5\) recovers standard label-flip CEs. For \(\tau > 0.5\), it directly encodes risk thresholds and supports analysis across the full confidence spectrum. Importantly, solutions to \ref{eq:2} at a higher \(\tau \) lie strictly deeper in the positive class region, providing robustness by construction rather than by post-hoc filtering \cite{hamman2023robust, pawelczyk2023probabilistic}.
We instantiate this framework using a Probabilistic Tsetlin Machine (PTM) \cite{kuruge2024probabilistic} paired with Bayesian optimization. The choice of PTM over other probabilistic classifiers-calibrated neural networks, Gaussian processes, or isotonically calibrated ensembles-is crucial and deliberate as PTMs expose a symbolic clause-based structure in which the model's probability estimate decomposes into contributions from human-interpretable logical rules. This enables a second, distinct contribution:\emph{ clause-level counterfactual attribution} \ref{sec:CLA}, which traces confidence differences between competing recourse options back to changes in specific clause activating probabilities. This explanatory layer is entirely absent from prior CE methods. When a practitioner must choose between two counterfactuals that achieve the same \(\tau\) but at different costs, clause-level attribution explains \emph{mechanistically} why the additional cost is or is not justified - an explanatory capability that black-box probabilistic models cannot provide.

Our main contributions are:
\begin{itemize}
    \item \textbf{Target-confidence CEs:} We introduce formulation Eq. \ref{eq:2} as the first CE objective in which prediction confidence is a user-specified constraint, not an emergent property. We prove asymptotic optimality of a constrained Bayesian optimization search under mild regularity conditions (Appendix~\ref{sec:convergence}).
    \item \textbf{Clause-level counterfactual attribution:} We show how PTMs (Appendix~\ref{app:TM}) enable a novel explanatory layer that connects feature-level changes to confidence shifts through interpretable logical clauses, supporting principled comparison for multiple recourse options.
    \item \textbf{Robustness by design:} We establish a practical link between confidence-aware counterfactuals and robustness against noisy execution.
\end{itemize}

\section{Background and Related Work}
\label{sec:2}
We organize the literature along four axes that motivate our contribution: the counterfactual research, the robustness problem that boundary-based methods create, uncertainty-aware approaches to recourse, and the role of interpretable model structure in explanation quality.

\textbf{Counterfactual explanation methods:} Since \cite{wachter2017counterfactual} introduced counterfactual explanations (CEs) as a mechanism for GDPR-compliant recourse, many extensions have improved diversity \cite{mothilal2020explaining}, feasibility and actionability \cite{ustun2019actionable, poyiadzi2020face, laugel2018comparison}, manifold consistency \cite{joshi2019towards, pawelczyk2020learning}, and causal faithfulness \cite{karimi2021algorithmic}. Other works address non-differentiable models through probabilistic approximations or tree-structure search \cite{lucic2022focus, tolomei2017interpretable}, while generative approaches constrain recourse to semantically meaningful latent subspaces \cite{downs2020cruds}.

Despite their diversity, all of these methods share the same core objective: minimize input-change cost subject to a \textbf{label flip} constraint. This formulation treats all points on the positive side of the decision boundary equally valid for recourse and does not offer a mechanism to the practitioner to express or reason about confidence based preferences among them. Our method aims to fill this gap by allowing practitioners to specify and compare recourse options at user-defined confidence levels.

\textbf{Robustness for CEs:} Many works have shown that counterfactuals close to the decision boundary are fragile under perturbations and model changes \cite{pawelczyk2023probabilistic, slack2021manipulated}. Robustness has therefore been studied under several threat models, including model updates \cite{upadhyay2021robust, jiang2023formalising}, model multiplicity \cite{jiang2024argumentative}, noisy execution of recourse \cite{hamman2023robust}, and test-time perturbations \cite{artelt2021evaluating}. A consistent observation across these works is that higher-confidence counterfactuals tend to be more robust. Existing methods encourage this indirectly through margin constraints \cite{dutta2022robust}, neighborhood-validity objectives \cite{hamman2023robust}, or worst-case optimization \cite{jiang2023formalising}.

However, robustness in these approaches remains secondary to the standard label-flip objective (Eq.~\ref{eq:1}), with confidence treated as a regularizer or post-hoc evaluation criterion. Consequently, users cannot explicitly request recourse at a desired robustness or confidence level. Our formulation addresses this limitation by making prediction confidence a first-class optimization constraint, enabling robustness by construction rather than post-hoc filtering.

\textbf{Uncertainty-aware and probabilistic recourse:} A more recent line of work incorporates uncertainty into CE generation. Antorán et al.\cite{antoran2020getting} propose CLUE, which searches the latent space of a generative model for inputs that reduce a Bayesian Neural Network's epistemic uncertainty. %CLUE provides explanations of why a model is uncertain about a given input: the explanation target is the input instance, and the goal is a latent-space perturbation that makes the model more confident in its existing prediction. This is a fundamentally different problem from ours. In CLUE, there is no recourse objective, no user-specified target probability, and no cost-of-change minimization; the framework is not designed to suggest actionable input modifications that achieve a desired outcome at a desired confidence level.
Pawelczyk et al. \cite{pawelczyk2023probabilistic} provide probabilistic robustness certificates, characterizing the probability that a CE generated under one model remains valid when the model is retrained on new data. While these are important theoretical contributions, they address different questions: In CLUE, there is no recourse objective, no user-specified target probability, and no cost-of-change minimization; thus it not designed to suggest actionable input modifications that achieve a desired outcome at a desired confidence level while for \cite{pawelczyk2023probabilistic} it quantifies the reliability of a CE after generation, rather than allowing the user to request a CE at a specified confidence level. %The user receives a certificate; they cannot ask for recourse that meets a risk threshold by design.

Our framework is the first to treat prediction confidence as a user-specified optimization constraint in CE generation. This distinction \say{constraint versus certificate}, \say{design-time versus post-hoc} is the central technical contribution of this paper relative to both of the above lines of work.

\textbf{Interpretable models and clause-level attribution:} Most CE methods are model-agnostic and provide feature-level explanations of the form \say{change feature \(i\) by \(\delta_i\)} \cite{wachter2017counterfactual, mothilal2020explaining}. However, when multiple counterfactuals exist for the same instance, such explanations cannot clarify why small feature differences may produce substantially different confidence levels or robustness properties.

Interpretable model families partially address this limitation by exposing internal decision structure. Decision trees and rule lists relate recourse to branching conditions \cite{tolomei2017interpretable}, while linear models support coefficient-based attribution \cite{ustun2019actionable}. The Tsetlin Machine (TM) \cite{granmo2018tsetlin} and its probabilistic extension, the Probabilistic Tsetlin Machine (PTM) \cite{kuruge2024probabilistic}, further enable predictions to be decomposed into propositional logical clauses with probabilistic activations. This supports \emph{clause-level counterfactual attribution}, where confidence differences between counterfactuals can be traced to changes in clause firing probabilities, providing a mechanistic explanation of why one recourse option is more robust or costly than another.

To our knowledge, no prior CE method supports this type of mechanistic comparison across competing recourse options.

\section{Problem Formulation}
Let $M:\mathcal{X} \rightarrow [0,1]$ be a probabilistic binary classifier that outputs the predicted probability of the positive class. Given an input $x \in \mathcal{X}$ such that $M(x) < 0.5$, the goal of algorithmic recourse is to provide guidance on how $x$ could be changed to achieve a favorable outcome. The standard CE objective seeks the minimal-cost input modification that flips the predicted label:
\(
\min_{x'} d(x, x') \quad \text{s.t.} \quad M(x') \geq 0.5,
\)
where $d(\cdot,\cdot)$ is a distance or cost function. This enforces label validity while minimizing change. While intuitive, as discussed in Sec.\ref{sec:2} this objective implicitly assumes that any point on the positive side of the decision boundary provides equally meaningful recourse, an assumption easily violated in practice. %In practice, this assumption is often violated as predictions are rarely acted upon solely based on their sign. Instead, decision-makers rely on confidence thresholds, safety margins, or risk constraints. For example, a bank may approve loans only when the predicted probability of repayment exceeds a fixed threshold. A counterfactual that barely satisfies $M(x') \geq 0.5$ may be valid in a binary sense but unacceptable in practice. Such counterfactuals are also fragile: small execution noise or minor model updates can invalidate the recourse.

%Existing robustness-oriented CE methods partially address this issue by encouraging higher confidence, but they do so implicitly and without user control.

We therefore propose to formalize counterfactual recourse as a \emph{target-confidence} problem. Given a desired confidence level $\tau \in (0.5,1)$ and tolerance $\epsilon > 0$, we seek:
\begin{equation}
    \min_{x'} d(x, x') \quad \text{s.t.} \quad |M(x') - \tau| \leq \epsilon.
\end{equation}

This formulation makes confidence an explicit requirement rather than an emergent property. It allows users to:
(i) request recourse at different confidence levels,
(ii) compare alternative counterfactuals based on cost--confidence trade-offs, and
(iii) reason explicitly about robustness through confidence margins. This objective generalizes standard CEs as a special case when $\tau = 0.5$.

\section{Target-Confidence Counterfactual Framework}
\label{sec:4}

We now present the general framework for generating target-confidence counterfactual explanations.
The framework is model-agnostic and separates the definition of the counterfactual objective from the choice of predictive model and optimization strategy.

%\subsection{Design principles}
Our framework is guided by four design principles motivated by limitations identified in prior work.

\textbf{Explicit confidence control:}
Unlike standard CE formulations, the desired prediction confidence is specified \emph{before} counterfactual generation. This allows users or decision-makers to request recourse aligned with their risk tolerance.\\
\textbf{Cost--confidence trade-off:}
Higher confidence typically requires larger deviations from the original input. Rather than hiding this trade-off, the framework exposes it explicitly, enabling principled comparison of alternative recourse options.\\
\textbf{Robustness by construction:}
Prior work has shown strong connections between confidence margins and robustness under noisy execution and model changes \cite{dutta2022robust,hamman2023robust}. By targeting confidence directly, robustness emerges naturally rather than being enforced through auxiliary constraints.\\
\textbf{Comparability of counterfactuals:}
Generating counterfactuals at multiple confidence levels enables comparison not only between counterfactuals and the original instance, but also among counterfactuals themselves.

\subsection{Formal objective}

Let $M(x)$ denote the predicted probability of the positive class. Given an input $x$, a target confidence $\tau$, and a tolerance $\epsilon$, we define the feasible counterfactual set as:
\begin{equation}
\mathcal{C}_{\tau}(x)
=
\{x' \in \Omega(x) \mid |M(x') - \tau| \leq \epsilon \},
\end{equation}

where \(\Omega(x)\) denotes the set of feasible counterfactual candidates for the factual instance \(x\), including any domain, plausibility, and actionability constraints.

The target-confidence counterfactual problem is:
\begin{equation}
\min_{x' \in \mathcal{C}_{\tau}} d(x, x'),
\end{equation}
where $d(\cdot,\cdot)$ captures the cost or effort required to implement the recourse.\footnote{The objective can alternatively be formulated as a multi-objective optimization problem that jointly minimizes recourse cost and probability deviation, e.g., $\big(d(x,x'),\, |M(x')-\tau|\big)$, allowing Pareto-optimal trade-offs between proximity and confidence satisfaction.} It generalizes standard CE for $\tau = 0.5$. Importantly, it allows the generation of \emph{multiple} counterfactuals for different $\tau$ values, supporting scenario analysis and decision-making under uncertainty.

\begin{comment}
\subsection{Relation to existing robustness objectives}

Several robust CE methods indirectly encourage high-confidence solutions by pushing counterfactuals away from the decision boundary. For example, neighborhood-based robustness criteria~\cite{dutta2022robust,hamman2023robust} and probabilistic validity guarantees~\cite{pawelczyk2023probabilistic} rely on confidence as a proxy.

Our framework differs in that confidence is not a surrogate objective but a constraint. This distinction is crucial: instead of asking whether a counterfactual is robust \emph{after} generation, we ask which counterfactual achieves a desired robustness level \emph{by design}.    
\end{comment}

\section{Model Instantiation}
While the target-confidence framework \ref{sec:4} is model-agnostic, it requires a classifier that produces calibrated probability estimates and supports meaningful probabilistic interpretations of confidence differences across candidate counterfactuals. We instantiate it using a Probabilistic Tsetlin Machine (PTM) \cite{kuruge2024probabilistic} paired with Bayesian optimization (BO), for the reasons established below.
\subsection{Probabilistic Tsetlin Machine}
The PTM extends the Tsetlin Machine \cite{granmo2018tsetlin} by introducing stochasticity into clause evaluation, resulting in probabilistic predictions over a symbolic clause-based structure (see Appendix~\ref{app:TM} for technical details).

This structure is particularly important in the context of target-confidence counterfactuals. In our setting, multiple counterfactuals may achieve the same predicted label or even the same target confidence, but through different underlying mechanisms. In contrast to discriminative classifiers whose scores lack direct probabilistic interpretation without post-hoc calibration making  confidence differences difficult to attribute, PTMs enable clause-level explanations that link feature changes directly to shifts in confidence. This makes PTMs especially well-suited for supporting counterfactual comparison and justification, which is one of the central goals of this work (Sec.\ref{sec:CLA}).

\subsection{Bayesian optimization for counterfactual search}
The counterfactual search problem is non-convex and potentially expensive to evaluate, especially when predictions are stochastic. Bayesian optimization is well-suited to this setting, as it balances exploration and exploitation while minimizing the number of model queries.

We use BO to search over the input space for candidate counterfactuals whose predicted probability matches the target confidence. The acquisition function naturally trades off proximity to the original instance and satisfaction of the confidence constraint. Importantly, BO allows us to generate multiple counterfactual candidates at the same confidence level, enabling diversity and comparison without modifying the objective.

The proposed search is supported by theoretical guarantees: under standard regularity conditions, constrained Bayesian optimization is asymptotically optimal, ensuring eventual discovery of feasible target-confidence counterfactuals and convergence to the minimum-cost solution (see Appendix~\ref{sec:convergence}).

\section{Counterfactual Comparison via Clause-Level Attribution }
\label{sec:CLA}
When multiple counterfactuals exist for the same instance, whether at the same target confidence \(\tau\) or across different confidence levels; the practitioner faces a selection problem: which recourse option to pursue, and why. When PTM is used, two counterfactuals may differ only marginally in feature space yet achieve substantially different confidence levels, because they activate different subsets of the PTM's clause structure. This section formalizes how clause-level attribution resolves this ambiguity.\\
\textbf{Literal-level explanations:} At the literal level, counterfactual explanations describe which features changed and by how much. This level aligns with standard CE explanations and remains important for actionability.
However, literal explanations alone cannot explain why two counterfactuals that differ only slightly in feature values result in substantially different confidence levels.\\
\textbf{Clause-level counterfactual attribution:} Clause-level analysis addresses this limitation and is a key motivation for our use of the PTM over other machine leraning methods. PTM clauses capture class-specific sub-patterns as conjunctions of literals, and predictions arise from the aggregation of their activations. Consequently, changes in confidence can be traced back to variations in the firing probabilities of individual clauses and their literal composition (Sec. \ref{sec:cla_example} illustrates an example).

By comparing clause activation distributions between the original instance and multiple counterfactuals, we can quantify how each clause contributes to differences in prediction confidence. This makes it possible to explain not only why a counterfactual succeeds, but also why one counterfactual achieves higher confidence or robustness than another, and to communicate this to users, decision-makers, or anyone involved.

\begin{comment}
\subsection{Comparing counterfactuals}

When multiple counterfactuals exist for the same instance, our framework supports comparison along three dimensions:
(i) cost of change,
(ii) achieved confidence, and
(iii) underlying explanatory mechanisms.

Returning to the loan example, a small increase in income may yield marginal approval confidence by activating a single clause, while a larger increase in initial deposit may activate multiple independent clauses, resulting in higher confidence. Clause-level explanations make such differences explicit and communicable to both customers and decision-makers.

This comparison capability is largely absent from existing CE methods, which typically present counterfactuals in isolation.    
\end{comment}

\section{Experimental Evaluation}

The goal of the experimental evaluation is to assess whether target-confidence counterfactuals provide meaningful advantages over standard counterfactual explanations in terms of robustness, interpretability, and decision support. Rather than focusing on predictive accuracy, our experiments are designed to evaluate the quality and stability of algorithmic recourse. The implementation and all experimental code related to this section can be found at: \url{https://anonymous.4open.science/r/tm_counterfactual-6360/README.md}

\subsection{Experimental Setup}

We consider binary classification tasks where counterfactual explanations are commonly used for recourse.
Experiments span two synthetic and two real-world datasets, summarised in Table~\ref{tab:datasets}.

\paragraph{Synthetic data:} We generate two balanced datasets of 1\,500 samples each (seed\,=\,42). \emph{Synthetic~2D} draws class-conditional samples from Gaussians centred at $(-2,-2)$ and $(+2,+2)$ with covariance $0.5\,I_{2}$.
\emph{Synthetic~5D} combines Normal, Laplace, clipped Cauchy, Uniform, and Poisson marginals across five features. Both are split 80/20 into train/test sets with stratification; the training set
is further subsampled to 400 instances via stratified $k$-means to keep PTM
training tractable.

\paragraph{Real-world data:} \emph{Iris (binary)} ($n{=}150$, $d{=}4$) distinguishes setosa from non-setosa. \emph{Haberman Survival} ($n{=}306$, $d{=}3$) predicts five-year survival after breast-cancer surgery.
All features are MinMax-scaled to $[0,1]$ before splitting 80/20 with stratification.

\begin{table}[ht]
\centering
\caption{Dataset overview.}
\label{tab:datasets}
\setlength{\tabcolsep}{6pt}
\renewcommand{\arraystretch}{1.1}
\begin{tabular}{lcccc}
\toprule
\textbf{Dataset} & \textbf{Samples} & \textbf{Features} & \textbf{Train} & \textbf{Test} \\
\midrule
Synthetic 2D     & 1\,500 & 2  & 400 & 300 \\
Synthetic 5D     & 1\,500 & 5  & 400 & 300 \\
Iris (binary)    & 150    & 4  & 120 & 30  \\
Haberman Survival& 306    & 3  & 244 & 62  \\
\bottomrule
\end{tabular}
\end{table}

\paragraph{Query selection:}
For each dataset we randomly select $N_{\text{eval}}{=}10$ class-0 test instances as factual queries and seek counterfactuals that move each query toward class~1.

\paragraph{Baseline methods:}
We compare against two gradient-based and sampling-based counterfactual methods from the CARLA library~\cite{pawelczyk2021carla}: \emph{Wachter}~\cite{wachter2017counterfactual}, which minimises a weighted sum of classification loss and $L_1$ distance via gradient descent ($\lambda{=}0.5$, lr$\ {=}\,0.1$, 1\,000 iterations); and \emph{GrowingSpheres}~\cite{laugel2018comparison}, which searches for
counterfactuals on expanding hyperspheres around the factual. Both methods use a two-hidden-layer ANN (50--10 units, ReLU, softmax output, Adam, 250~epochs, early stopping) as the underlying classifier.

\paragraph{PTM+BO:}
The Probabilistic Tsetlin Machine is trained on binarized features obtained by thresholding each continuous feature at all unique training-set values (synthetic) or at up to 20 quantile-spaced thresholds per feature (real-world datasets with higher cardinality). Default PTM hyperparameters are clauses $C{=}20$, automaton states $S{=}100$, precision $s{=}1.5$, threshold $T{=}5$, and trained for 50--80 epochs. Counterfactual search is performed via multi-objective Bayesian Optimisation (Optuna TPE sampler) minimising $(|\hat{p}(x') - \tau|,\; \|x'-x\|_2)$ jointly, with a tolerance of $\varepsilon{=}0.1$ on the probability gap. Each BO run uses 300 trials; the PTM's \texttt{predict\_prob} function is valuated with 50 Monte Carlo clause samples per call during optimisation and 100~MC samples for final metric evaluation.
We report results at two confidence thresholds, $\tau{=}0.50$ (proximity-focused)
and $\tau{=}0.85$ (robustness-focused).

\paragraph{Uncertainty quantification:}
Because the PTM's stochastic clause sampling and the BO's random search introduce run-to-run variation, we repeat the BO search 10~times per query (with seeds $42, 43, \ldots, 51$) and report the mean$\,\pm\,$std of each metric.

\paragraph{Evaluation metrics:}
For each counterfactual we measure: \emph{L1 distance} (Manhattan) and \emph{L2 distance} (Euclidean) between the factual and counterfactual in the (MinMax-scaled) feature space; \emph{Confidence}, the classifier's predicted probability of the target class at the counterfactual; \emph{Robustness}, the fraction of 50 Gaussian-perturbed copies ($\sigma{=}0.01$) that retain the target-class prediction; and
\emph{Success Rate}, the fraction of queries that receive at least one valid counterfactual.

\paragraph{Computational environment:}
All experiments were conducted on a laptop with a 12th~Gen Intel Core i5-1245U CPU, 16\,GB RAM, and Intel Iris Xe integrated graphics (no discrete GPU).
Individual PTM+BO counterfactual searches (300 BO trials $\times$ 10 repeats per query) required approximately 1~minute per query on the lower-dimensional datasets (Synthetic~2D, Iris, Haberman) and up to 4~minutes per query on the Synthetic~5D dataset.

\subsection{Results and Analysis}

Table~\ref{tab:comparison} reports the counterfactual quality metrics for Wachter, GrowingSpheres, and PTM+BO at two confidence thresholds ($\tau{=}0.50$ and $\tau{=}0.85$) across all four benchmarks. PTM+BO metrics are reported as mean\,$\pm$\,std over 10 independent BO runs per query to quantify stochastic variation. All methods achieve a 100\% success rate, confirming that every query receives a valid counterfactual. PTM+BO ($\tau{=}0.50$) consistently produces the smallest L1 and L2 distances on the lower-dimensional settings (Synthetic 2D and Iris), indicating proximity-optimal counterfactuals, while PTM+BO ($\tau{=}0.85$) dominates on Confidence across every dataset and achieves the highest Robustness in three of four benchmarks---often by a substantial margin. The trade-off between the two thresholds is clear: a higher $\tau$ yields counterfactuals that are farther from the decision boundary in feature space yet much more stable under input perturbations and more confident in their predicted class. GrowingSpheres is competitive on L1/L2 distance in the Synthetic 5D setting but lags behind PTM+BO on Confidence and Robustness throughout. These results suggest that PTM+BO provides a flexible mechanism for controlling the proximity--robustness trade-off through the threshold parameter $\tau$, an advantage not available in the baseline CARLA methods.

\begin{table}[ht]
\centering   
\centering
\caption{%
  Comparison of counterfactual methods across synthetic (2D, 5D) and real-world
  (Iris, Haberman Survival) datasets.
  $\downarrow$~lower is better; $\uparrow$~higher is better.
  \textbf{Bold} denotes the best mean value per metric within each dataset.
  PTM+BO values are mean\,$\pm$\,std over 10 BO repeats per query.
}
\label{tab:comparison}
\setlength{\tabcolsep}{5pt}
\renewcommand{\arraystretch}{1.15}
\resizebox{\textwidth}{!}{
% Original format had an extra column: Succ. Rate
\begin{tabular}{llcccc} % originally llccccc
\toprule
\textbf{Dataset}
  & \textbf{Method}
  & \textbf{L1}$\downarrow$
  & \textbf{L2}$\downarrow$
  & \textbf{Confidence}$\uparrow$
  & \textbf{Robustness}$\uparrow$
  % & \textbf{Succ. Rate}
  \\
\midrule

\multirow{4}{*}{Synthetic 2D}
  & Wachter
    & 0.525 & 0.372 & 0.561 & 0.656 % & \textbf{1.0000}
    \\
  & GrowingSpheres
    & 0.521 & 0.370 & 0.524 & 0.792 % & \textbf{1.0000}
    \\
  & PTM+BO ($\tau{=}0.50$)
    & \textbf{\pmv{0.350}{0.046}} & \textbf{\pmv{0.304}{0.024}} & \pmv{0.452}{0.084} & \pmv{0.347}{0.328} % & \textbf{1.0000}
    \\
  & PTM+BO ($\tau{=}0.85$)
    & \pmv{0.468}{0.043} & \pmv{0.372}{0.024} & \textbf{\pmv{0.805}{0.063}} & \textbf{1.000} % & \textbf{1.0000}
    \\
\midrule

\multirow{4}{*}{Synthetic 5D}
  & Wachter
    & 0.768 & 0.367 & 0.536 & 0.508 % & \textbf{1.0000}
    \\
  & GrowingSpheres
    & \textbf{0.545} & \textbf{0.289} & 0.587 & 0.934 % & \textbf{1.0000}
    \\
  & PTM+BO ($\tau{=}0.50$)
    & \pmv{0.658}{0.228} & \pmv{0.375}{0.148} & \pmv{0.506}{0.085} & \pmv{0.523}{0.331} % & \textbf{1.0000}
    \\
  & PTM+BO ($\tau{=}0.85$)
    & \pmv{0.677}{0.129} & \pmv{0.372}{0.078} & \textbf{\pmv{0.809}{0.063}} & \textbf{1.000} % & \textbf{1.0000}
    \\
\midrule

\multirow{4}{*}{Iris}
  & Wachter
    & 0.907 & 0.471 & 0.768 & \textbf{0.970} % & \textbf{1.0000}
    \\
  & GrowingSpheres
    & 0.693 & 0.396 & 0.584 & 0.824 % & \textbf{1.0000}
    \\
  & PTM+BO ($\tau{=}0.50$)
    & \textbf{\pmv{0.637}{0.072}} & \textbf{\pmv{0.366}{0.046}} & \pmv{0.417}{0.098} & \pmv{0.216}{0.229} % & \textbf{1.0000}
    \\
  & PTM+BO ($\tau{=}0.85$)
    & \pmv{0.892}{0.115} & \pmv{0.507}{0.075} & \textbf{\pmv{0.777}{0.092}} & \pmv{0.925}{0.113} % & \textbf{1.0000}
    \\
\midrule

\multirow{4}{*}{Haberman}
  & Wachter
    & 0.393 & 0.346 & 0.501 & 0.750 % & \textbf{1.0000}
    \\
  & GrowingSpheres
    & 0.467 & 0.320 & 0.500 & 0.674 % & \textbf{1.0000}
    \\
  & PTM+BO ($\tau{=}0.50$)
    & \pmv{0.379}{0.257} & \pmv{0.277}{0.189} & \pmv{0.883}{0.041} & \textbf{1.000} % & \textbf{1.0000}
    \\
  & PTM+BO ($\tau{=}0.85$)
    & \textbf{\pmv{0.133}{0.062}} & \textbf{\pmv{0.099}{0.043}} & \textbf{\pmv{0.917}{0.034}} & \textbf{1.000} % & \textbf{1.0000}
    \\
\bottomrule
\end{tabular}
}
\end{table}
Our results show that boundary-based counterfactuals typically achieve minimal cost but remain close to the decision boundary, resulting in low confidence and poor robustness. Robust CE baselines improve stability but offer limited control over achieved confidence.

In contrast, target-confidence counterfactuals enable explicit control over prediction confidence. As the target confidence increases, counterfactuals move deeper into the positive region, leading to improved robustness under noise and model variation. Importantly, the framework exposes the trade-off between cost and robustness in a transparent manner.

Qualitative visualizations of the generated counterfactuals are provided in Appendix~\ref{app:results}, including 2D decision-boundary plots and PCA projections for higher-dimensional datasets. These visualizations support the quantitative findings by illustrating how higher target-confidence counterfactuals move further from unstable boundary regions into more robust areas of the learned decision landscape.

\subsection{Interpretability and Comparison of Counterfactuals}
\label{sec:cla_example}
To demonstrate the practical behavior of the proposed reliability-aware counterfactual explanation framework, we conducted experiments on the Haberman's Survival Dataset.

The nature of the PTM is that its rules are not fixed at the inference. One of the dominant rules learned during training was: \emph{Age $\le$ 73 AND Positive lymph nodes $\le$ 4} which supports survival beyond five years. Both counterfactuals generated at $\tau{=}0.50$ and $\tau{=}0.85$ for the factual patient, \emph{Age = 56, Operation year = 65, Positive lymph nodes = 9} satisfy this rule:
\begin{itemize}
    \item Counterfactual 1 (at $\tau{=}0.50$):
    
    \emph{Age = 56, Operation year = 65, Positive lymph nodes = 4}

    \item Counterfactual 2 (at $\tau{=}0.85$):
    
    \emph{Age = 56, Operation year = 65, Positive lymph nodes = 2}
\end{itemize}

However, their reliability differs due to rule fragility, which can be analyzed using automata-level inclusion and exclusion probabilities of their literals. A clause becomes fragile when a stricter version of one of its conditions may be included with non-negligible probability. 

For instance, there is approximately a $0.225$ probability that the lymph-node threshold in the above rule may tighten from $\le$ 4 to $\le$ 3. When this stricter condition becomes active, Counterfactual 1 no longer satisfies the rule, causing the clause output to collapse from active to inactive. This reduces the positive voting support for the survival class and introduces prediction instability. In contrast, the rule for the high-confidence counterfactual has lymph nodes at 2, placing the patient well within all active rule boundaries. As a result, the activation is consistent. 

This behavior reflects the decision mechanism of the PTM: after training, clauses are represented through learned probability distributions over literal-inclusion states rather than fixed deterministic rules. During inference, clause activation therefore depends on which literals are probabilistically included and whether the input satisfies them. Prediction robustness depends on how securely the input satisfies the high-probability literals of the relevant clauses, since small changes in feature values can affect clause outputs and the resulting vote aggregation.

Unlike traditional counterfactual explanations that focus only on feature changes, this approach additionally quantifies how stable the underlying rules remain after modification, enabling practitioners to distinguish between minimally valid and structurally robust counterfactuals.

\section{Discussion and Limitations}
\textbf{Relation to robustness:} Our framework aligns naturally with existing robustness notions. Target-confidence counterfactuals with higher $\tau$ values tend to lie farther from decision boundaries, improving stability under noisy execution. This connection has been observed empirically in prior work~\cite{dutta2022robust,hamman2023robust}, and our formulation provides a direct mechanism for exploiting it. However, our experiments evaluate only a limited set of target-confidence levels, and broader analysis across a wider range of $\tau$ values remains important future work.\\
\textbf{Model dependence:} While the framework is model-agnostic in principle, its effectiveness depends on meaningful probabilistic predictions. Poorly calibrated models may limit the interpretability of confidence-based recourse, making calibration an important future direction.\\
\textbf{Benchmark scope and evaluation:} Our empirical evaluation is limited to a relatively small set of synthetic and real-world datasets selected to demonstrate the core properties of target-confidence recourse and clause-level attribution. Broader validation across larger benchmarks remains future work. Direct comparison against CE methods is also challenging because many approaches are tightly coupled to specific model classes, optimization assumptions, or differentiability requirements. Furthermore, different predictive models may learn substantially different decision boundaries and confidence landscapes on the same dataset, making one-to-one comparisons inherently difficult.\\
\textbf{Computational considerations:} BO introduces additional computational overhead compared to gradient-based CE methods, although this is partly offset by reduced model queries and improved solution quality. PTM training can also be slower than conventional neural-network pipelines because the current implementation relies on a research-oriented PTM framework rather than newer optimized TM implementations supporting GPU acceleration and parallelized execution. However, once trained, PTM inference remains computationally lightweight and enables fast probabilistic prediction generation suitable for real-time counterfactual evaluation and uncertainty estimation \cite{mao2025dynamic, duan2025ethereal}.\\
\textbf{Why structured models matter for counterfactual comparison:} While target-confidence counterfactuals can, in principle, be generated using any probabilistic classifier, our results highlight the importance of structured models for explanation quality. In particular, the clause-based nature of PTMs enables explanations that go beyond feature-level differences and provide insight into the mechanisms driving confidence changes. This becomes especially important when multiple counterfactuals are available and stakeholders must decide which recourse option to pursue.

\section{Conclusion}

We presented TRUST, a target-confidence formulation of counterfactual recourse that treats prediction confidence as an explicit requirement rather than a post-hoc property. This reframes recourse from merely crossing a decision boundary to finding actionable changes that meet a desired confidence level, exposing the trade-off between cost, confidence, and robustness.

Instantiated with a Probabilistic Tsetlin Machine and Bayesian optimization, TRUST generates counterfactuals across confidence levels and compares them by proximity and stability. Experiments show that higher-confidence counterfactuals generally yield more robust recourse under input perturbations, while making the cost of stronger decision support explicit.

The PTM-based instantiation further enables clause-level attribution, explaining not only what feature changes are required, but also why one counterfactual is more reliable than another by revealing whether the supporting rules are robustly or fragily activated. Overall, TRUST provides a confidence-aware and interpretable foundation for algorithmic recourse in high-stakes settings.

\bibliographystyle{plainnat} % Or another consistent style
\bibliography{sample-base}

@String{Computer = "{IEEE} Computer" }

@String{Springer = "Springer-Verlag" }

@article{wachter2017counterfactual,
  title     = {Counterfactual Explanations without Opening the Black Box: Automated Decisions and the GDPR},
  author    = {Wachter, Sandra and Mittelstadt, Brent and Russell, Chris},
  journal   = {Harvard Journal of Law \& Technology},
  volume    = {31},
  number    = {2},
  pages     = {841--887},
  year      = {2017}
}

@inproceedings{pawelczyk2022adversarial,
  title     = {Exploring Counterfactual Explanations through the Lens of Adversarial Examples: A Theoretical and Empirical Analysis},
  author    = {Pawelczyk, Martin and Agarwal, Chirag and Joshi, Shalmali and Upadhyay, Sohini and Lakkaraju, Himabindu},
  booktitle = {Proceedings of the International Conference on Artificial Intelligence and Statistics (AISTATS)},
  year      = {2022}
}

@inproceedings{pawelczyk2020learning,
  title={Learning model-agnostic counterfactual explanations for tabular data},
  author={Pawelczyk, Martin and Broelemann, Klaus and Kasneci, Gjergji},
  booktitle={Proceedings of the web conference 2020},
  pages={3126--3132},
  year={2020}
}

@inproceedings{poyiadzi2020face,
  title={FACE: feasible and actionable counterfactual explanations},
  author={Poyiadzi, Rafael and Sokol, Kacper and Santos-Rodriguez, Raul and De Bie, Tijl and Flach, Peter},
  booktitle={Proceedings of the AAAI/ACM Conference on AI, Ethics, and Society},
  pages={344--350},
  year={2020}
}

@inproceedings{karimi2021algorithmic,
  title={Algorithmic recourse: from counterfactual explanations to interventions},
  author={Karimi, Amir-Hossein and Sch{\"o}lkopf, Bernhard and Valera, Isabel},
  booktitle={Proceedings of the 2021 ACM conference on fairness, accountability, and transparency},
  pages={353--362},
  year={2021}
}

@inproceedings{ustun2019actionable,
  title={Actionable recourse in linear classification},
  author={Ustun, Berk and Spangher, Alexander and Liu, Yang},
  booktitle={Proceedings of the conference on fairness, accountability, and transparency},
  pages={10--19},
  year={2019}
}

@article{antoran2020getting,
  title={Getting a clue: A method for explaining uncertainty estimates},
  author={Antor{\'a}n, Javier and Bhatt, Umang and Adel, Tameem and Weller, Adrian and Hern{\'a}ndez-Lobato, Jos{\'e} Miguel},
  journal={arXiv preprint arXiv:2006.06848},
  year={2020}
}

@legaltext{EUAIActArt13,
  author = {{European Parliament} and {Council of the European Union}},
  title = {Regulation (EU) 2024/1689 of the European Parliament and of the Council of 13 June 2024 on harmonised rules on artificial intelligence (Artificial Intelligence Act)},
  journal = {Official Journal of the European Union},
  year = {2024},
  url = {http://data.europa.eu/eli/reg/2024/1689/oj},
  note = {Article 13 - Transparency and provision of information to deployers}
  }

@inproceedings{mothilal2020explaining,
  title={Explaining machine learning classifiers through diverse counterfactual explanations},
  author={Mothilal, Ramaravind K and Sharma, Amit and Tan, Chenhao},
  booktitle={Proceedings of the 2020 conference on fairness, accountability, and transparency},
  pages={607--617},
  year={2020}
}

@article{joshi2019towards,
  title={Towards realistic individual recourse and actionable explanations in black-box decision making systems},
  author={Joshi, Shalmali and Koyejo, Oluwasanmi and Vijitbenjaronk, Warut and Kim, Been and Ghosh, Joydeep},
  journal={arXiv preprint arXiv:1907.09615},
  year={2019}
}

@inproceedings{upadhyay2021robust,
  title     = {Towards Robust and Reliable Algorithmic Recourse},
  author    = {Upadhyay, Sohini and Joshi, Shalmali and Lakkaraju, Himabindu},
  booktitle = {Advances in Neural Information Processing Systems (NeurIPS)},
  year      = {2021}
}

@inproceedings{dutta2022robust,
  title     = {Robust Counterfactual Explanations for Tree-Based Ensembles},
  author    = {Dutta, Sanghamitra and Long, Jason and Mishra, Saumitra and Tilli, Cecilia and Magazzeni, Daniele},
  booktitle = {International Conference on Machine Learning (ICML)},
  year      = {2022}
}

@article{pawelczyk2021carla,
  title={Carla: a python library to benchmark algorithmic recourse and counterfactual explanation algorithms},
  author={Pawelczyk, Martin and Bielawski, Sascha and Heuvel, Johannes van den and Richter, Tobias and Kasneci, Gjergji},
  journal={arXiv preprint arXiv:2108.00783},
  year={2021}
}

@inproceedings{hamman2023robust,
  title     = {Robust Counterfactual Explanations for Neural Networks with Probabilistic Guarantees},
  author    = {Hamman, Faisal and Noorani, Erfaun and Mishra, Saumitra and Magazzeni, Daniele and Dutta, Sanghamitra},
  booktitle = {International Conference on Machine Learning (ICML)},
  year      = {2023}
}

@InProceedings{pmlr-v162-dominguez-olmedo22a,
  title = 	 {On the Adversarial Robustness of Causal Algorithmic Recourse},
  author =       {Dominguez-Olmedo, Ricardo and Karimi, Amir H and Sch{\"o}lkopf, Bernhard},
  booktitle = 	 {Proceedings of the 39th International Conference on Machine Learning},
  pages = 	 {5324--5342},
  year = 	 {2022},
  editor = 	 {Chaudhuri, Kamalika and Jegelka, Stefanie and Song, Le and Szepesvari, Csaba and Niu, Gang and Sabato, Sivan},
  volume = 	 {162},
  series = 	 {Proceedings of Machine Learning Research},
  month = 	 {17--23 Jul},
  publisher =    {PMLR},
  url = 	 {https://proceedings.mlr.press/v162/dominguez-olmedo22a.html},
}

@inproceedings{kuruge2024probabilistic,
  title={The probabilistic tsetlin machine: A novel approach to uncertainty quantification},
  author={Kuruge, Darshana Abeyrathna and El Mekkaoui, Sara and Hafver, Andreas and Agrell, Christian},
  booktitle={Proceedings of the 2024 8th International Conference on Advances in Artificial Intelligence},
  pages={39--47},
  year={2024}
}

@inproceedings{jiang2023formalising,
  title     = {Formalising the Robustness of Counterfactual Explanations for Neural Networks},
  author    = {Jiang, Junqi and Leofante, Francesco and Rago, Antonio and Toni, Francesca},
  booktitle = {Proceedings of the AAAI Conference on Artificial Intelligence (AAAI)},
  year      = {2023}
}

@article{pawelczyk2023probabilistic,
  title     = {Probabilistically Robust Recourse: Navigating the Trade-offs between Costs and Robustness in Algorithmic Recourse},
  author    = {Pawelczyk, Martin and Datta, Teresa and van den Heuvel, Johannes and Kasneci, Gjergji and Lakkaraju, Himabindu},
  journal   = {Proceedings of the International Conference on Learning Representations (ICLR)},
  year      = {2023}
}

@inproceedings{jiang2024argumentative,
  title     = {Recourse under Model Multiplicity via Argumentative Ensembling},
  author    = {Jiang, Junqi and Rago, Antonio and Leofante, Francesco and Toni, Francesca},
  booktitle = {International Conference on Autonomous Agents and Multiagent Systems (AAMAS)},
  year      = {2024}
}

@inproceedings{slack2021manipulated,
  title     = {Counterfactual Explanations Can Be Manipulated},
  author    = {Slack, Dylan and Hilgard, Anna and Lakkaraju, Himabindu and Singh, Sameer},
  booktitle = {Advances in Neural Information Processing Systems (NeurIPS)},
  year      = {2021}
}

@inproceedings{artelt2021evaluating,
  title     = {Evaluating Robustness of Counterfactual Explanations},
  author    = {Artelt, André and Vaquet, Valerie and Velioglu, Riza and Hinder, Fabian and Brinkrolf, Johannes and Schilling, Malte and Hammer, Barbara},
  booktitle = {IEEE Symposium Series on Computational Intelligence (SSCI)},
  year      = {2021}
}

@inproceedings{laugel2018comparison,
  title={Comparison-based inverse classification for interpretability in machine learning},
  author={Laugel, Thibault and Lesot, Marie-Jeanne and Marsala, Christophe and Renard, Xavier and Detyniecki, Marcin},
  booktitle={International conference on information processing and management of uncertainty in knowledge-based systems},
  pages={100--111},
  year={2018},
  organization={Springer}
}

@inproceedings{lucic2022focus,
  title={FOCUS: Flexible optimizable counterfactual explanations for tree ensembles},
  author={Lucic, Ana and Oosterhuis, Harrie and Haned, Hinda and De Rijke, Maarten},
  booktitle={Proceedings of the AAAI conference on artificial intelligence},
  volume={5.36},
  pages={5313--5322},
  year={2022}
}

@inproceedings{tolomei2017interpretable,
  title={Interpretable predictions of tree-based ensembles via actionable feature tweaking},
  author={Tolomei, Gabriele and Silvestri, Fabrizio and Haines, Andrew and Lalmas, Mounia},
  booktitle={Proceedings of the 23rd ACM SIGKDD international conference on knowledge discovery and data mining},
  pages={465--474},
  year={2017}
}

@article{downs2020cruds,
  title={Cruds: Counterfactual recourse using disentangled subspaces},
  author={Downs, Michael and Chu, Jonathan L and Yacoby, Yaniv and Doshi-Velez, Finale and Pan, Weiwei},
  journal={ICML WHI},
  volume={2020},
  pages={1--23},
  year={2020}
}

@article{granmo2018tsetlin,
  title={The Tsetlin Machine--A Game Theoretic Bandit Driven Approach to Optimal Pattern Recognition with Propositional Logic},
  author={Granmo, Ole-Christoffer},
  journal={arXiv preprint arXiv:1804.01508},
  year={2018}
}

@BOOK{test,
   author = "Donald E. Knuth",
   title = "Seminumerical Algorithms",
   volume = 2,
   series = "The Art of Computer Programming",
   publisher = "Addison-Wesley",
   address = "Reading, MA",
   edition = "2nd",
   month = "10~" # jan,
   year = "1981",
}

@article{Bull2011,
  author = {Bull, Adam D.},
  title = {Convergence Rates of Efficient Global Optimization Algorithms},
  journal = {Journal of Machine Learning Research},
  volume = {12},
  pages = {2879--2904},
  year = {2011}
}

@inproceedings{Srinivas2010,
  author = {Srinivas, Niranjan and Krause, Andreas and Kakade, Sham and Seeger, Matthias},
  title = {Gaussian Process Optimization in the Bandit Setting: No Regret and Experimental Design},
  booktitle = {Proceedings of ICML},
  year = {2010}
}

@inproceedings{Gardner2014,
  author = {Gardner, Jacob and Kusner, Matt and Xu, Zhixiang and Weinberger, Kilian and Cunningham, John},
  title = {Bayesian Optimization with Inequality Constraints},
  booktitle = {Proceedings of ICML},
  year = {2014}
}

@article{Gelbart2014,
  author = {Gelbart, Michael A. and Snoek, Jasper and Adams, Ryan P.},
  title = {Bayesian Optimization with Unknown Constraints},
  journal = {arXiv preprint arXiv:1403.5607},
  year = {2014}
}

@article{abeyrathna2021extending,
  title={Extending the tsetlin machine with integer-weighted clauses for increased interpretability},
  author={Abeyrathna, K Darshana and Granmo, Ole-Christoffer and Goodwin, Morten},
  journal={IEEE Access},
  volume={9},
  pages={8233--8248},
  year={2021},
  publisher={IEEE}
}

@article{darshana2020regression,
  title={The regression Tsetlin machine: a novel approach to interpretable nonlinear regression},
  author={Darshana Abeyrathna, K and Granmo, Ole-Christoffer and Zhang, Xuan and Jiao, Lei and Goodwin, Morten},
  journal={Philosophical Transactions of the Royal Society A: Mathematical, Physical and Engineering Sciences},
  volume={378},
  number={2164},
  year={2020},
  publisher={The Royal Society}
}

@article{mao2025dynamic,
  title={Dynamic Tsetlin machine accelerators for on-chip training using FPGAs},
  author={Mao, Gang and Rahman, Tousif and Maheshwari, Sidharth and Pattison, Bob and Shao, Zhuang and Shafik, Rishad and Yakovlev, Alex},
  journal={IEEE Transactions on Circuits and Systems I: Regular Papers},
  year={2025},
  publisher={IEEE}
}

@inproceedings{duan2025ethereal,
  title={ETHEREAL: Energy-efficient and High-throughput Inference using Compressed Tsetlin Machine},
  author={Duan, Shengyu and Shafik, Rishad and Yakovlev, Alex},
  booktitle={2025 10th International Workshop on Advances in Sensors and Interfaces (IWASI)},
  pages={1--6},
  year={2025},
  organization={IEEE}
}

@article{bergstra2011algorithms,
  title={Algorithms for hyper-parameter optimization},
  author={Bergstra, James and Bardenet, R{\'e}mi and Bengio, Yoshua and K{\'e}gl, Bal{\'a}zs},
  journal={Advances in neural information processing systems},
  volume={24},
  year={2011}
}

\appendix

\section{Theoretical Convergence Properties}
\label{sec:convergence}

To establish asymptotic guarantees for the discovery and convergence of target-confidence counterfactuals, we rely on the established properties of Bayesian optimization (BO) under constraints.

\begin{assumption}[Regularity and Strategy]
\label{assum:bo_convergence}
We assume the following conditions hold:
\begin{itemize}
    \item \textbf{(A1) Continuity:} The probabilistic prediction function $M:\mathcal{X}\rightarrow[0,1]$ and the cost function $d(x, \cdot)$ are continuous on the compact input space $\mathcal{X}$.
    \item \textbf{(A2) Feasibility:} The feasible counterfactual set $C_\tau = \{x' \in \mathcal{X} : |M(x') - \tau| \leq \epsilon\}$ has a nonempty interior.
    \item \textbf{(A3) Exploration:} The search uses a globally exploring acquisition strategy (e.g., Expected Improvement or GP-UCB) with known convergence rates \cite{bergstra2011algorithms, Bull2011,Srinivas2010}.
    \item \textbf{(A4) Constraint Handling:} Feasibility is managed via a constrained BO strategy that learns the feasible region during optimization \cite{Gardner2014,Gelbart2014}.
\end{itemize}
\end{assumption}

\begin{proposition}[Asymptotic Optimality]
Under Assumption \ref{assum:bo_convergence}, the sequence of candidate counterfactuals $\{x'_n\}$ generated by the BO algorithm satisfies:
\begin{enumerate}
    \item Feasible target-confidence counterfactuals are eventually identified (i.e., $\exists n$ such that $x'_n \in C_\tau$).
    \item The observed minimum cost converges to the true global optimum: $d^*_n \to d^*$ as $n \to \infty$, where $d^* = \min_{x' \in C_\tau} d(x, x')$.
\end{enumerate}
\end{proposition}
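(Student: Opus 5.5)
The plan is to reduce both claims to a single density statement: under (A3) and (A4), the query sequence $\{x'_n\}$ is asymptotically dense in $\mathcal{X}$. I will assume this as the operative content of the cited convergence results \cite{Bull2011,Srinivas2010,Gardner2014,Gelbart2014}. Concretely, for every open set $U\subseteq\mathcal{X}$ with nonempty interior there is some $n$ with $x'_n\in U$, and in fact infinitely many such $n$. For Expected Improvement with fixed or estimated hyperparameters, Bull (2011) proves this density directly for functions in the RKHS. For GP-UCB it follows from the no-regret bound, since a sequence that avoids an open region forever cannot have vanishing regret for every function in the class. For the constrained variants, the feasibility model of \cite{Gardner2014,Gelbart2014} multiplies the acquisition by a probability of feasibility. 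That probability stays bounded away from zero on regions that have not yet been sampled, so density is preserved. Making this transfer explicit is the step I expect to be the main obstacle. The TPE sampler actually used in the experiments is not covered by these theorems, so the proposition should be read as a statement about algorithms that satisfy (A3) and (A4). I will state the argument at that level rather than claim it for TPE itself.

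\textbf{Claim 1.} By (A2), $C_\tau$ contains a nonempty open set $U$. Density then gives some $n$ with $x'_n\in U\subseteq C_\tau$, so a feasible point is eventually evaluated. Since the constraint $|M(x')-\tau|\le\epsilon$ is evaluated exactly at queried points, that point is identified as feasible.

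\textbf{Claim 2.} Let $d^*_n=\min\{d(x,x'_k):k\le n,\ x'_k\in C_\tau\}$. This quantity is defined from the index given by Claim 1 onward, is nonincreasing, and satisfies $d^*_n\ge d^*$. Here $d^*$ is attained because $C_\tau$ is a closed subset of the compact set $\mathcal{X}$ (closedness follows from continuity of $M$, (A1)) and $d(x,\cdot)$ is continuous. It therefore suffices to show that for every $\eta>0$ some feasible query has cost below $d^*+\eta$. Let $x^\star$ be a minimizer. The key step is that $x^\star$ lies in the closure of $\operatorname{int} C_\tau$. This does not follow from (A2) alone, so I would strengthen (A2) to the regularity condition $C_\tau=\overline{\operatorname{int}C_\tau}$. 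This holds, for instance, whenever $M$ has no flat plateaus at levels $\tau\pm\epsilon$; alternatively one can replace $d^*$ by the infimum over the interior. Under this condition, continuity of $d(x,\cdot)$ gives an open ball $B\subseteq\operatorname{int}C_\tau$ near $x^\star$ on which $d(x,\cdot)<d^*+\eta$. Density then produces a query $x'_n\in B$, so $d^*_n<d^*+\eta$ for all sufficiently large $n$, and monotonicity yields $d^*_n\to d^*$.
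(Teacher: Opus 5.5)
Your Claim~1 matches the paper's Phase~2 (global exploration plus the nonempty interior from (A2)), and your existence argument matches its Phase~1. Your Claim~2, however, takes a genuinely different route.

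The paper's Phase~3 asserts that, once a feasible point is found, the constrained acquisition has sublinear regret. From this it concludes that feasible queries approach $x^*$, and continuity of $d(x,\cdot)$ then gives $d^*_n\to d^*$. You instead obtain Claim~2 from density alone, using monotonicity of $d^*_n$ and an open ball of cost below $d^*+\eta$ inside $\mathrm{int}\,C_\tau$. This is more elementary. No regret bound for the constrained problem is needed, and any globally dense search, even pure random search, would do. That matters here, because the cited regret results \cite{Srinivas2010,Bull2011} are for unconstrained objectives, and \cite{Gardner2014,Gelbart2014} supply no convergence guarantee for the constrained case.

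More importantly, your route exposes a hypothesis that Phase~3 uses silently. Without it, the second claim is false under (A1)--(A4) as stated. Take $\mathcal{X}=[0,1]$, $x=0$, $d(x,x')=|x'|$, and a continuous $M$ with:
\begin{itemize}
\item $M(0)<0.5$;
\item $M<\tau-\epsilon$ on $[0,0.5]$, except for a peak that just touches the band at $M(0.2)=\tau-\epsilon$;
\item $M\equiv\tau$ on $[0.6,1]$.
\end{itemize}
Then $C_\tau=\{0.2\}\cup[a,1]$ with $a>0.5$ has nonempty interior and $d^*=0.2$. Nothing in (A3)--(A4) forces a query exactly at $0.2$. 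A dense search that misses that single point, as one with continuous proposals does almost surely, keeps $d^*_n\ge a$ forever.

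So your strengthening $C_\tau=\overline{\mathrm{int}\,C_\tau}$ (or redefining $d^*$ as the infimum over $\mathrm{int}\,C_\tau$) is necessary, not a weakness. The paper's regret-based route would give more, namely rates and concentration of queries near $x^*$. It would do so only if such regret bounds were actually established for this constrained setting, and they would still need your regularity condition.

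Two small corrections. First, your example condition of \emph{no flat plateaus at levels $\tau\pm\epsilon$} targets the wrong phenomenon. A full-dimensional plateau at those levels lies inside $\mathrm{int}\,C_\tau$ and is harmless, and the example above has no plateau at all. The actual failure mode is $M$ touching the band from outside. A clean sufficient condition is that every point with $|M(x')-\tau|=\epsilon$ is a limit of points with $|M(x')-\tau|<\epsilon$. Second, \cite{Bull2011} in fact shows that EI with standard plug-in hyperparameter estimates may never find the optimum, so density is not automatic there. Since you adopt density as an explicit assumption, this does not affect your argument.
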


\begin{proof}
The proof is structured in three phases:

\textbf{Phase 1: Existence of an Optimal Solution.} 
By (A1), the function $x' \mapsto |M(x') - \tau|$ is continuous. Consequently, the feasible set $C_\tau$ is a closed subset of the compact space $\mathcal{X}$, making $C_\tau$ itself compact. Since $d(x, \cdot)$ is continuous on this compact set, the \textit{Extreme Value Theorem} guarantees that a global minimum $x^* \in C_\tau$ with cost $d^*$ exists.

\textbf{Phase 2: Discovery of the Feasible Region.} 
By (A2), $C_\tau$ contains an open neighborhood. Standard global exploration results for BO strategies like GP-UCB and Expected Improvement guarantee that the algorithm will generate query points arbitrarily close to any point in the search domain over time. Since $C_\tau$ has a nonempty interior, the algorithm is guaranteed to eventually sample at least one point $x'_n$ within the feasible region.

\textbf{Phase 3: Convergence to Minimum Cost.} 
Once the feasible region is identified, constrained BO strategies focus the search to minimize the objective while maintaining feasibility. As the acquisition strategy achieves sublinear regret, the sequence of feasible samples $\{x'_k \in C_\tau\}$ approaches the global optimum $x^*$. By the continuity of the distance function $d(x, \cdot)$ (A1), the sequence of best-observed costs $d^*_n$ converges to the optimal cost $d^*$.
\end{proof}

\textit{Note: While this result is general, it applies to the Probabilistic Tsetlin Machine (PTM) because its stochastic inference mechanism generates the continuous probability estimates required to satisfy the regularity assumptions in (A1) [5].}

\section{Technical Background on Tsetlin Machines} \label{app:TM}
This appendix provides a technical foundation for the Tsetlin Machine (TM) and the Probabilistic Tsetlin Machine (PTM), which serve as the core predictive components of the TRUST framework.

\subsection{The Tsetlin Machine (TM)}
The Tsetlin Machine is a learning automaton-based system that represents patterns using propositional logic \cite{granmo2018tsetlin}. 

\subsubsection{Input Representation and Literals}
An input vector $\mathbf{X} = (x_1, \dots, x_o) \in \{0, 1\}^o$ is expanded into a set of literals $L$ consisting of the original variables and their negations:
\begin{equation}
L = \{x_1, \dots, x_o, \neg x_1, \dots, \neg x_o\} = \{l_1, \dots, l_{2o}\} \text{ \cite{abeyrathna2021extending}}
\end{equation}

\subsubsection{Conjunctive Clauses}
The TM uses $m$ conjunctive clauses $C_j$. Each clause $C_j(X)$ is formed by ANDing a subset of literals $I_j \subseteq \{1, \dots, 2o\}$:
\begin{equation}
C_j(X) = \bigwedge_{k \in I_j} l_k \text{ \cite{darshana2020regression}}
\end{equation}
A clause outputs 1 if and only if all included literals are 1; otherwise, it outputs 0. If a clause is empty ($I_j = \emptyset$), it outputs 1 during learning and 0 during inference \cite{granmo2018tsetlin}.

\subsubsection{Tsetlin Automata (TA) Mechanism}
For each clause, a team of $2o$ Tsetlin Automata decides which literals to include. A TA is a finite-state machine with $2N$ states \cite{granmo2018tsetlin}. Let $a_{j,k} \in \{1, \dots, 2N\}$ be the state of the TA for literal $k$ in clause $j$:
\begin{itemize}
    \item \textbf{Exclude (Action 1):} If $1 \le a_{j,k} \le N$, the literal $l_k$ is excluded.
    \item \textbf{Include (Action 2):} If $N+1 \le a_{j,k} \le 2N$, the literal $l_k$ is included \cite{darshana2020regression}.
\end{itemize}

\subsubsection{Classification via Voting}
Clauses are divided into positive polarity ($C_j^+$), which learns patterns for class-1, and negative polarity ($C_j^-$), which learns patterns for class-0. The final prediction $\hat{y}$ is determined by a majority vote $v$:
\begin{equation}
v = \sum_{j=1}^{m/2} C_j^+(X) - \sum_{j=1}^{m/2} C_j^-(X) 
\end{equation}
\begin{equation}
\hat{y} = \begin{cases} 1 & \text{if } v \ge 0 \\ 0 & \text{if } v < 0 \end{cases}
\end{equation}

The basic Tsetlin Machine structure is illustrated in Figure~\ref{fig:tmmodel}.

\begin{figure}[!ht]
  \centering
  \includegraphics[width=0.6\textwidth]{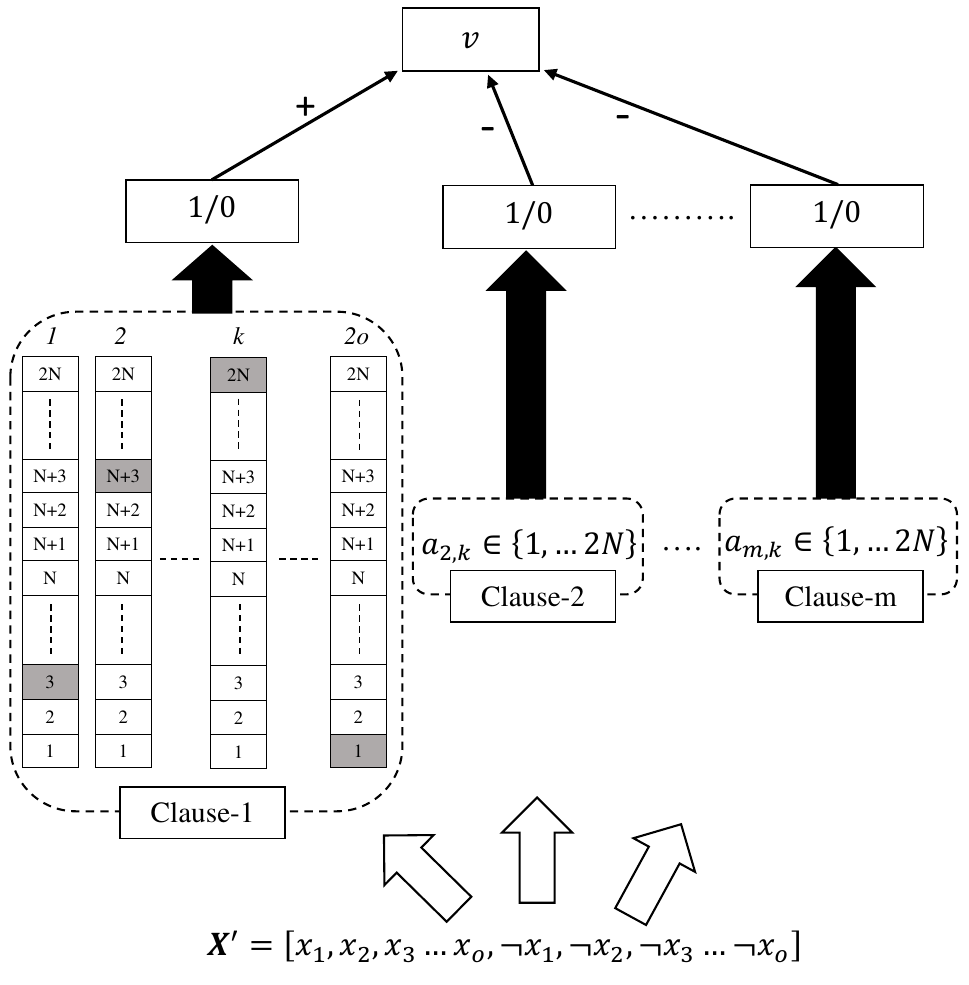}
  \caption{The Tsetlin Machine structure \cite{abeyrathna2021extending}}
  \label{fig:tmmodel}
\end{figure}

\subsubsection{Learning Feedback}
Learning is driven by two decentralized feedback types that adjust TA states based on the clause output, literal value, and class label \cite{darshana2020regression}. The hyperparameter $s$ controls the granularity of the clause.

\textbf{Type I Feedback (Combats False Negatives):}
This feedback is given to clauses with positive polarity when the target label $y = 1$. It reinforces Include actions for true literals and Exclude actions for false literals to help the clause evaluate to 1.

\begin{table}[!t]
\centering
% \begin{threeparttable}
\newcolumntype{P}[1]{>{\centering\arraybackslash}p{#1}}

\caption{Reward, Inaction and Penalty probabilities for Type I Feedback.}
\label{tab:type1}

\begin{tabular}{P{8mm}|c|c|c|c|c|c}
\toprule
% \multicolumn{3}{c|}{} & \multicolumn{4}{c}{Type I Feedback} \\ \hline
\multicolumn{3}{c|}{Clause Output} & \multicolumn{2}{c|}{1} & \multicolumn{2}{c}{0} \\ \hline
\multicolumn{3}{c|}{Literal Value} & 1 & 0 & 1 & 0 \\ \hline

\multirow{6}{*}{\rotatebox[origin=c]{90}{Current State}} 
& \multirow{3}{*}{Include} 
& Reward Probability   & (s-1)/s & NA & 0 & 0 \\
&                     
& Inaction Probability & 1/s     & NA & (s-1)/s & (s-1)/s \\
&                     
& Penalty Probability  & 0       & NA & 1/s & 1/s \\ \cline{2-7}

& \multirow{3}{*}{Exclude} 
& Reward Probability   & 0 & 1/s & 1/s & 1/s \\
&                     
& Inaction Probability & 1/s & (s-1)/s & (s-1)/s & (s-1)/s \\
&                     
& Penalty Probability  & (s-1)/s & 0 & 0 & 0 \\

\bottomrule
\end{tabular}
% \end{threeparttable}
\end{table}

\textbf{Type II Feedback (Combats False Positives):}
This feedback is given to clauses with negative polarity when $y = 1$. It forces the clause to evaluate to 0 by penalizing the exclusion of 0-valued literals.

\begin{table}[!t]
\centering
% \begin{threeparttable}
\newcolumntype{P}[1]{>{\centering\arraybackslash}p{#1}}

\caption{Reward, Inaction and Penalty probabilities for Type II Feedback.}
\label{tab:type2}

\begin{tabular}{P{8mm}|c|c|c|c|c|c}
\toprule
% \multicolumn{7}{c}{Type II Feedback} \\ \hline
\multicolumn{3}{c|}{Clause Output} & \multicolumn{2}{c|}{1} & \multicolumn{2}{c}{0} \\ \hline
\multicolumn{3}{c|}{Literal Value} & 1 & 0 & 1 & 0 \\ \hline

\multirow{6}{*}{\rotatebox[origin=c]{90}{Current State}} 
& \multirow{3}{*}{Include} 
& Reward Probability   & 0/s & NA & 0 & 0 \\
&                     
& Inaction Probability & 1     & NA & 1 & 1 \\
&                     
& Penalty Probability  & 0       & NA & 0 & 0 \\ \cline{2-7}

& \multirow{3}{*}{Exclude} 
& Reward Probability   & 0 & 0 & 0 & 0 \\
&                     
& Inaction Probability & 1 & 0 & 1 & 1 \\
&                     
& Penalty Probability  & 0 & 1 & 0 & 0 \\

\bottomrule
\end{tabular}
% \end{threeparttable}
\end{table}

\subsection{The Probabilistic Tsetlin Machine (PTM)}
The PTM \cite{kuruge2024probabilistic} extends the TM to quantify uncertainty by replacing deterministic integer states with State Probability Vectors (SPVs).

\subsubsection{State Representation}
Instead of a single integer $a_{j,k}$, each TA's state is a distribution over all $2N$ states:
\begin{equation}
SPV_{j,k} = [p(S_1), p(S_2), \dots, p(S_{2N})] \in^{2N} \text{ \cite{kuruge2024probabilistic}}
\end{equation}

\subsubsection{Learning with Transition Probability Matrices (TPM)}
During training, the SPVs are updated by multiplying them with TPMs derived from the feedback tables above:
\begin{equation}
SPV_{j,k}(n+1) \leftarrow SPV_{j,k}(n) \cdot TPM \text{ \cite{kuruge2024probabilistic}}
\end{equation}
This allows the model to learn the probability of a literal being included rather than a binary decision.

\subsubsection{Stochastic Inference}
During inference, states are sampled from their respective SPVs \cite{kuruge2024probabilistic}. By performing $K$ stochastic passes, the PTM generates a predictive mean that serves as a calibrated probability estimate $M(x)$:
\begin{equation}
M(x) = \frac{1}{K} \sum_{j=1}^{K} p_j(Y|X)
\end{equation}

\section{Additional Results} \label{app:results}

\begin{figure}[!ht]
  \centering
  \includegraphics[width=0.8\textwidth]{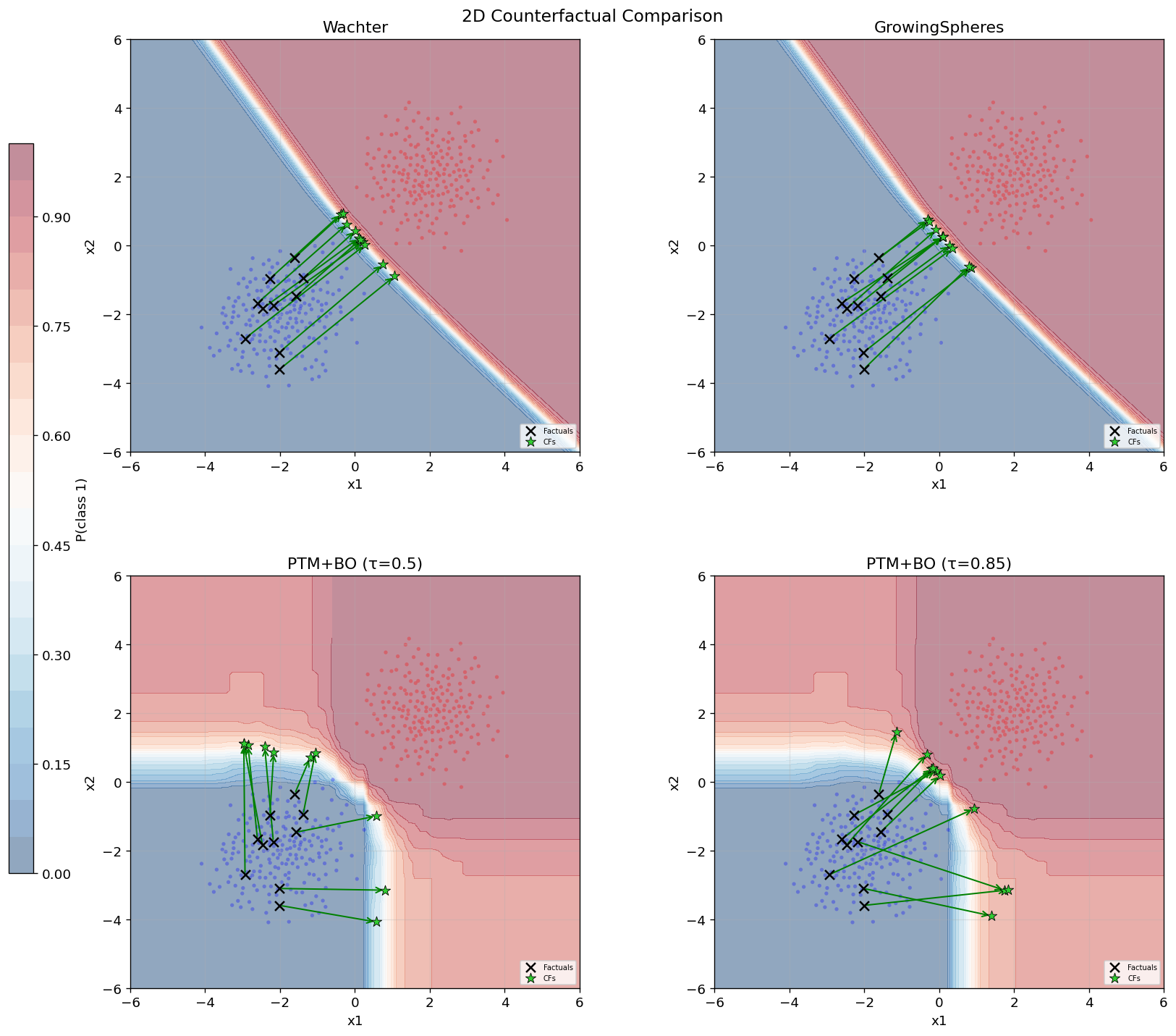}
  \caption{2D Counterfactual Comparison}
  \label{fig:2d}
\end{figure}

\begin{figure}[!ht]
  \centering
  \includegraphics[width=0.8\textwidth]{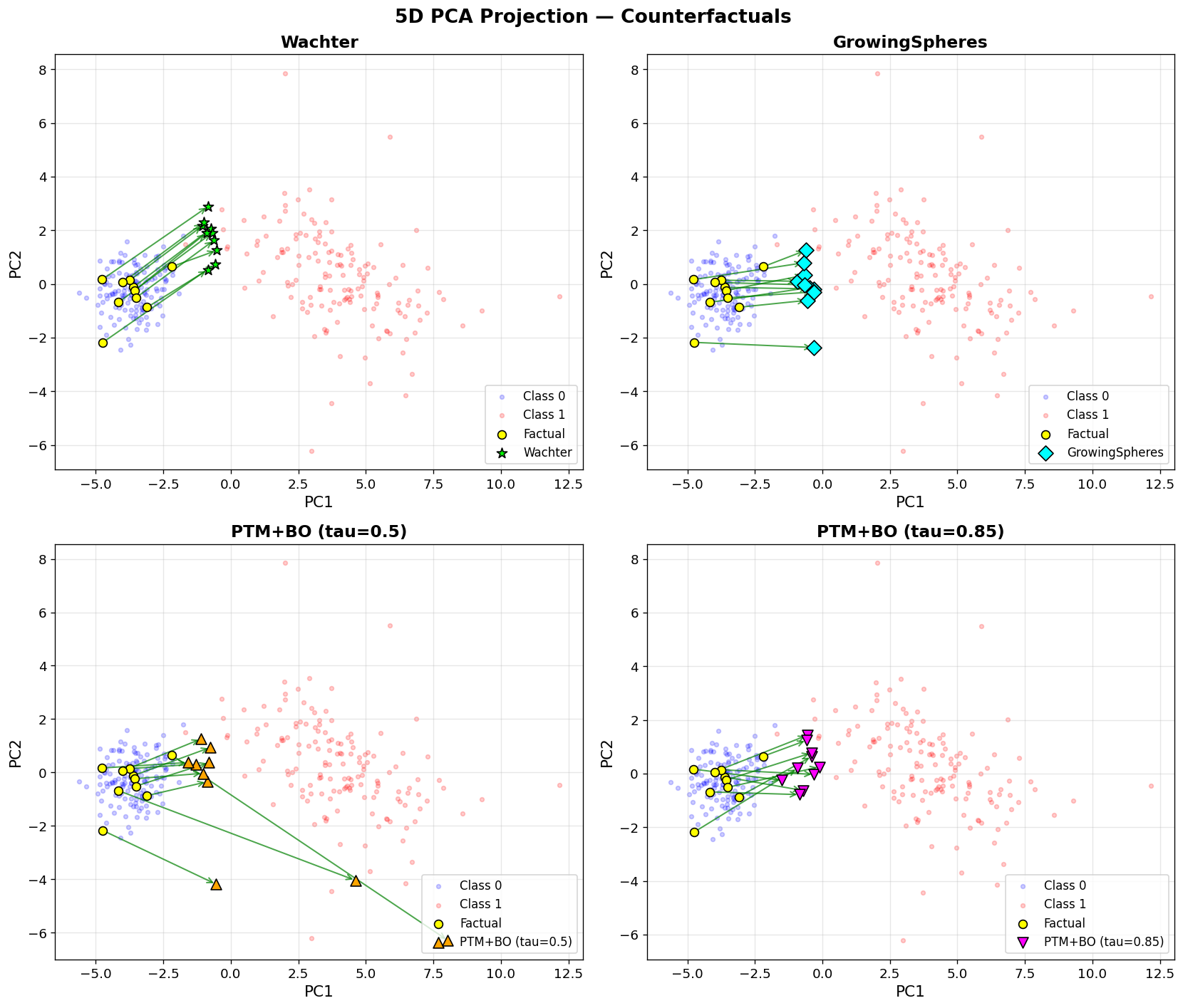}
  \caption{5D PCA Projection — Counterfactuals}
  \label{fig:5d}
\end{figure}

\begin{figure}[!ht]
  \centering
  \includegraphics[width=0.8\textwidth]{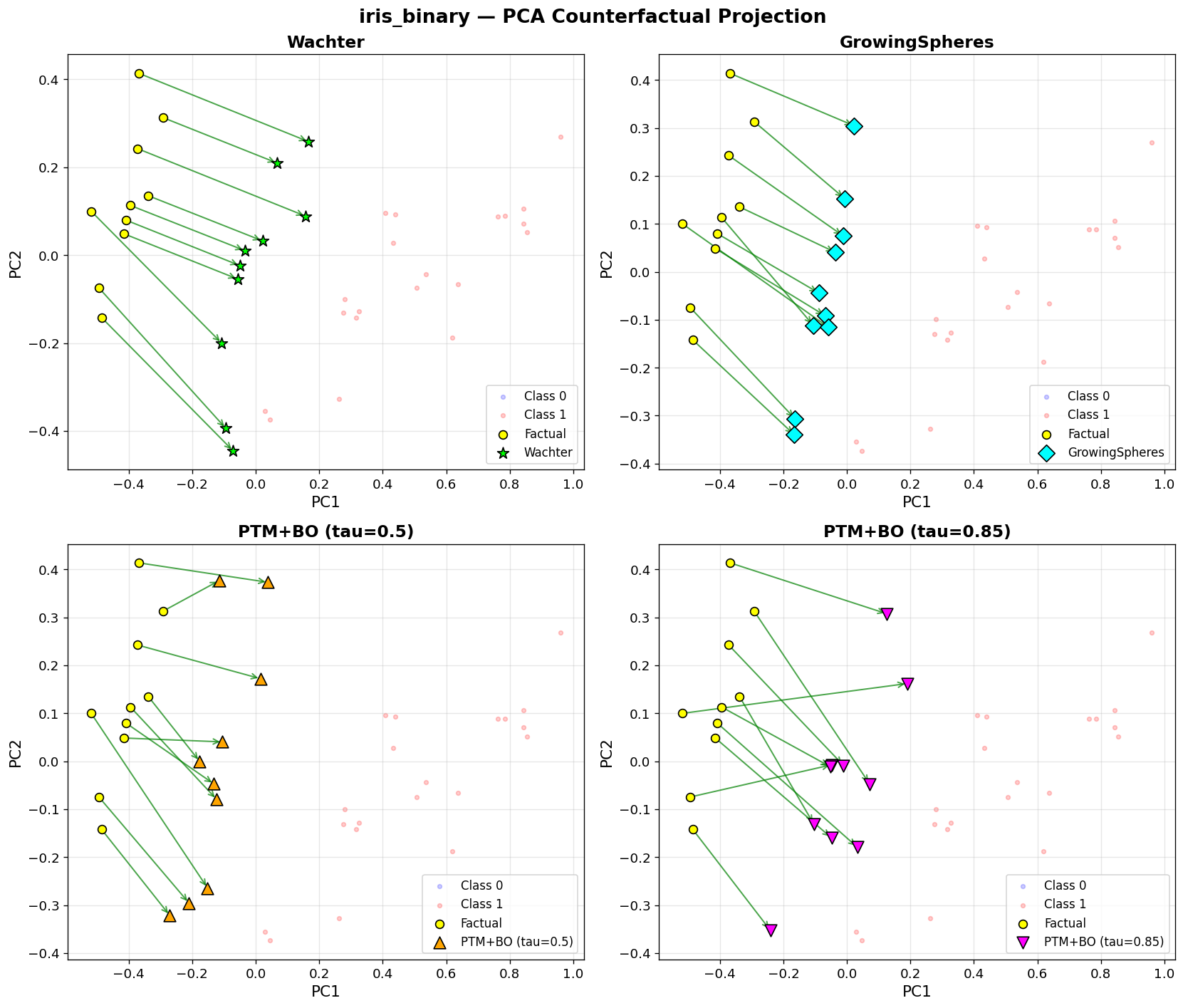}
  \caption{Iris Dataset PCA Projection — Counterfactuals}
  \label{fig:iris}
\end{figure}

\begin{figure}[!ht]
  \centering
  \includegraphics[width=0.8\textwidth]{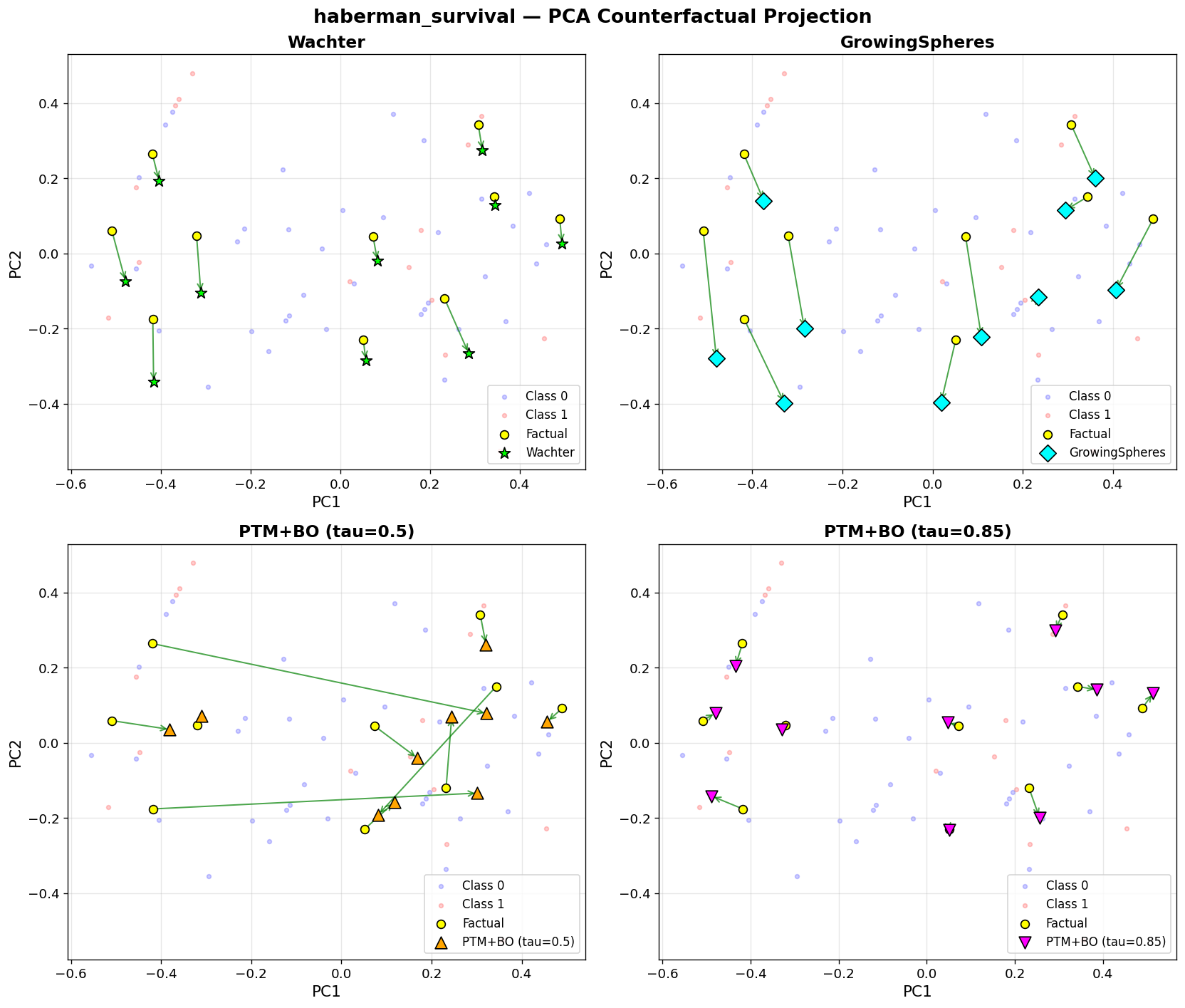}
  \caption{Haberman Survival Dataset PCA Projection — Counterfactuals}
  \label{fig:haberman}
\end{figure}

\paragraph{Synthetic 2D decision boundaries (Figure~\ref{fig:2d}).}
The 2D boundary plots overlay each method's counterfactuals on the classifier's predicted probability surface.
Wachter and GrowingSpheres both move factuals just past the ANN decision boundary, producing counterfactuals that lie on the transition region where $P(\text{class }1)\approx 0.5$; correspondingly, their confidence scores are modest (${\approx}0.56$ and ${\approx}0.52$).
The PTM's learned boundary is visibly axis-aligned. PTM+BO ($\tau{=}0.50$) generates the shortest displacements overall (L2$\,{=}\,0.30$), with counterfactuals landing near the PTM boundary, while PTM+BO ($\tau{=}0.85$) pushes counterfactuals further into the high-confidence red region, achieving perfect robustness (1.00) at the a slightly higher cost.

\paragraph{Synthetic 5D PCA projection (Figure~\ref{fig:5d}).}
In the PCA projection of the five-dimensional setting, Wachter counterfactuals cluster tightly near the class boundary along PC1, consistent with its gradient-descent strategy that stops as soon as the prediction flips.
GrowingSpheres produces more dispersed counterfactuals with competitive L1/L2 distances (L2$\,{=}\,0.29$) and high robustness (0.93).
PTM+BO ($\tau{=}0.50$) counterfactuals spread across the boundary with occasional outliers visible in the lower PC2 range, reflecting the BO's exploration of the search landscape.
PTM+BO ($\tau{=}0.85$) counterfactuals are displaced further along PC1 into the class-1 consistent with the threshold requiring the PTM to assign $P(\text{class }1)\geq 0.85$, which ensures perfect robustness.

\paragraph{Iris PCA projection (Figure~\ref{fig:iris}).} 
On the Iris dataset, the two classes are well separated in PCA space. Wachter counterfactuals lie deep into the class-1 (L1$\,{=}\,0.91$, Confidence$\,{=}\,0.77$), whereas GrowingSpheres places its counterfactuals closer to the boundary but still within the class-1 region. PTM+BO ($\tau{=}0.50$) achieves the smallest displacements of all methods (L2$\,{=}\,0.28$), with counterfactuals clustering at the edge of the class-1 region along PC1, confirming that the proximity-focused threshold finds near-boundary recourse.
PTM+BO ($\tau{=}0.85$) counterfactuals move further into the class-1 cluster attaining the highest confidence.

\paragraph{Haberman Survival PCA projection (Figure~\ref{fig:haberman}).}
The Haberman dataset exhibits substantial class overlap in PCA space.
Wachter and GrowingSpheres both produce counterfactuals that remain scattered near the factuals, reflecting the difficulty of crossing a blurred decision boundary (Confidence$\,{\approx}\,0.50$ for both).
PTM+BO ($\tau{=}0.50$) counterfactuals show moderate displacements with some arrows pointing into ambiguous regions, yet already achieve perfect robustness. PTM+BO ($\tau{=}0.85$) stands out: its counterfactuals are tightly grouped in
the class-1-dominant area of the PCA plot with visibly low cost (L2$\,{=}\,0.1$), the smallest displacement across all methods and datasets, while simultaneously achieving the highest confidence (0.92) and perfect robustness. 

%\newpage
%\input{checklist.tex}

\end{document}